\documentclass[11pt,letterpaper,onecolumn]{article}
\usepackage[T1]{fontenc}
\usepackage[utf8]{inputenc}
\usepackage{lmodern}
\usepackage[left=1.25in,right=1.25in,top=1in,bottom=1in]{geometry}
\usepackage{amsmath,amssymb,amsthm,mathtools,bm}
\usepackage{microtype,booktabs,array,tabularx,enumitem,needspace}
\usepackage[numbers,sort&compress]{natbib}
\usepackage[hidelinks]{hyperref}
\hypersetup{pdftitle={EntroPrefill: Renyi-Guided Context Pruning with Conditional Stability Guarantees},pdfauthor={Inbasekaran S.;}}
\newtheorem{theorem}{Theorem}[section]
\newtheorem{lemma}[theorem]{Lemma}
\newtheorem{proposition}[theorem]{Proposition}
\newtheorem{corollary}[theorem]{Corollary}
\theoremstyle{definition}

\theoremstyle{remark}

\newcommand{\E}{\mathbb E}
\newcommand{\Prb}{\mathbb P}

\newcommand{\ind}{\mathbf 1}
\newcommand{\norm}[1]{\left\lVert #1\right\rVert}

\newcommand{\softmax}{\operatorname{softmax}}
\newcommand{\diag}{\operatorname{diag}}
\newcommand{\UOR}{\operatorname{UOR}}
\newcommand{\supp}{\operatorname{supp}}

\newcommand{\pos}[1]{\left[#1\right]_+}
\allowdisplaybreaks[1]
\title{\bfseries\Large EntroPrefill: R\'enyi-Guided Context Pruning\\
with Conditional Stability Guarantees\\
for Retrieval-Augmented Generation}
\author{Inbasekaran S.\textsuperscript{1}\\[3pt]
\small\textsuperscript{1}SRM Institute of Science and Technology, India\\}
\date{}

\begin{document}
\maketitle
\begin{abstract}
Mid-prefill pruning can reduce the sequence processed by deeper transformer layers, but attention concentration alone does not certify that discarded context is dispensable. We formulate EntroPrefill as a R\'enyi-guided proposal mechanism coupled to explicit constraints on discarded attention mass. Sink-isolated, regularized head pooling respects grouped-query attention while exposing a quantitative trade-off between specialization and worst-head coverage. We derive a mixture-to-head deletion envelope, a computable upper bound on feasible token removal, and a finite-sample observer guarantee that remains valid when the pruning layer is selected adaptively. We then establish a conditional transformer perturbation bound with explicit sufficient Lipschitz constants and a first-token decision-margin corollary. A counterexample shows why shallow observations alone cannot imply an unconditional future-output guarantee. The systems analysis distinguishes query-head unions, physical page allocation, and KV-transfer payload, and gives an arithmetic break-even condition for pruning. These results define a theoretical design and its limits; they do not assert measured acceleration or preserved task accuracy. Experiments are reserved for subsequent validation of the assumptions, approximation tightness, and end-to-end resource trade-offs.
\end{abstract}

\section{Introduction \& Related Work}

Retrieval-augmented generation augments a query with document passages that differ in their relevance and computational value. A passage can contain a directly useful fact, an intermediate reasoning dependency, or a distractor. Processing every passage through every transformer layer incurs work even when some content is unnecessary for the response. Removing passages during prefill therefore offers a natural optimization opportunity, provided the selection rule distinguishes a computational saving from an unsupported claim of semantic preservation.

Attention-based selection has substantial prior art. LLMLingua compresses prompts before target-model inference, while AttentionRAG uses query-focused attention for context pruning \citep{jiang2023,fang2025}. LazyLLM dynamically selects prompt tokens and supports their reconsideration during generation; SlimInfer prunes intermediate hidden states; ASL adaptively selects a pruning layer from attention-rank behavior \citep{fu2024,long2025,taniguchi2026}. These works preclude claiming novelty for attention-guided compression, intermediate-layer pruning, or adaptive layer selection in isolation. EntroPrefill instead studies how an entropy-guided proposal can be coupled to explicit deletion constraints, adaptive statistical auditing, and conditional output bounds.

Decode-cache policies address a related resource dimension. H2O balances recent and high-importance tokens; SnapKV derives important positions from a prompt observation window; Ada-KV studies adaptive cache-budget allocation and bounds attention-output loss \citep{zhang2023,li2024,feng2024}. The local relationship between discarded attention mass and output perturbation is thus not introduced here as an unprecedented principle. Our analysis develops its consequences for a global chunk-removal decision that changes the support processed by subsequent layers, where unobserved attention and representation drift become essential assumptions.

Grouped-query attention (GQA) shares keys and values among query heads \citep{ainslie2023}. Page-EntroKV proposes sink-isolated entropy weighting and page-level selection within these groups \citep{pageentro}. Attention sinks motivate conditioning entropy away from designated initial positions, but concentration after conditioning still does not identify factual correctness \citep{xiao2024,wu2024}. At the execution level, FlashAttention-2, PagedAttention, DistServe, and Sarathi-Serve optimize different parts of the computation and resource schedule \citep{dao2023,kwon2023,zhong2024,agrawal2024}. Their benefits do not transfer automatically to a newly combined pruning system.

The mathematical contribution is a unified conditional analysis. First, a mixture envelope converts pooled discarded mass into a bound for an individual observed attention row and quantifies when pooling loses useful coverage. Second, a constrained removal formulation admits a dual upper bound, separating feasible pruning from unattainable budget targets. Third, an independent observer audit controls selected-layer error without treating adaptive stopping as a fixed test. Fourth, a blockwise stability theorem states the additional assumptions needed to propagate deletion through a pre-normalized transformer. Elementary probability, convex-mixture identities, and norm inequalities are identified as established ingredients; the proposed contribution is their coupling to one fully specified pruning procedure.

\section{Problem Formulation}

\subsection{Prompt support and GQA attention}

Let the prompt contain protected tokens $\mathcal P$ and disjoint candidate document chunks $c_1,\ldots,c_k$. Instructions, the complete user query, required delimiters, and configured initial sink tokens are protected. If protection intersects a chunk, the entire chunk is protected before indexing the remaining candidates. Write $n_i=|c_i|$, $\mathcal D=\bigcup_i c_i$, $N=\sum_i n_i$, and $T=|\mathcal P|+N$. All candidate documents precede the user-query observer positions. If $k=0$, the method performs ordinary inference.

Consider a decoder-only transformer with $L$ blocks, model width $d_m$, $H_Q$ query heads, $G$ KV groups, and head dimension $d$. Let $r=H_Q/G$ be an integer, and let $\mathcal H_g$ contain the query heads sharing group $g$. At layer $l$, query position $u$, and head $h$, attention is
\begin{equation}
 A^{l,h}_{u,j}=\frac{\exp z^{l,h}_{u,j}}{\sum_{v\le u}\exp z^{l,h}_{u,v}},
 \qquad z^{l,h}_{u,j}=\frac{q^{l,h}_u\cdot k^{l,g(h)}_j}{\sqrt d},\quad j\le u,
 \label{eq:attention}
\end{equation}
and is zero on masked positions. Original position IDs are used throughout. Every document token is visible to every user-query observer; no causal-visibility correction favoring later documents follows from this layout.

Choose a fixed collection $\mathcal Q_p$ of $W_p\ge1$ proposal observers from the user query. They may be a deterministic suffix or a random sample drawn independently of the audit described in Section~\ref{sec:audit}. Define
\begin{equation}
 a^{l,h}_j=\frac1{W_p}\sum_{u\in\mathcal Q_p}A^{l,h}_{u,j}.
 \label{eq:head-average}
\end{equation}
The averaging is over query rows, not over a truncated key window. All causally visible key positions remain in the distribution.

\subsection{Decisions, guarantees, and norms}

A removal set $E\subseteq\mathcal D$ is a union of complete candidate chunks. The retained support is $R=[T]\setminus E$, containing $\mathcal P$, with length $T'=T-|E|$. At the selected layer $l^*$, the method keeps $X^{l^*}|_R$ and the shallow-layer KV entries indexed by $R$, then processes only $R$ through the remaining layers. It does not restart the model on a shortened text prompt.

We distinguish an exact guarantee on specified proposal rows, a probabilistic guarantee for a specified observer distribution, and a conditional guarantee for the full-model output. These statements concern different objects. For a retained-state matrix $X$, use the row-maximum norm $\norm{X}_{\max,2}=\max_u\norm{X_u}_2$. Vector and matrix norms without another subscript are Euclidean and spectral, respectively. Attention measures use natural logarithms. The group-floor parameter is $\zeta\in(0,1]$, the entropy temperature is $\tau>0$, and numerical residual thresholds are positive.

\section{Sink-Isolated R\'enyi Pooling}

\subsection{Regularized head weighting}

Let $\mathcal S\subseteq\mathcal P$ be the sink set. For each head, set $z_h=\sum_{j\notin\mathcal S}a^{l,h}_j$. A head is entropy-eligible when $z_h\ge z_{\min}>0$. For eligible heads, define the conditional distribution on $j\notin\mathcal S$ by
\begin{equation}
 \widetilde a^h_j=\frac{a^{l,h}_j}{z_h},\qquad
 C_h=\sum_{j\notin\mathcal S}(\widetilde a^h_j)^2,
 \qquad H^{\rm ns}_{2,h}=-\ln C_h.
 \label{eq:collision}
\end{equation}
For a group with at least one eligible head, assign eligible heads
\begin{equation}
 v_h=\frac{C_h^{1/\tau}}{\sum_{a\in\mathcal H_g:\,z_a\ge z_{\min}}C_a^{1/\tau}},
 \qquad w_h=(1-\zeta)v_h+\frac{\zeta}{r},
 \label{eq:weights}
\end{equation}
with $v_h=0$ for ineligible heads. If all heads in a group are ineligible, set $v_h=1/r$. This fallback preserves normalization; if every group falls back, the algorithm bypasses pruning. Collision sums and normalizations are accumulated in FP32 or greater precision.

The weights are exponential in negative entropy, since $C_h^{1/\tau}=\exp(-H^{\rm ns}_{2,h}/\tau)$. They are not reciprocals of entropy. The positive floor prevents a numerically or statistically suppressed head from receiving zero mixture coverage. Pool the original attention distributions:
\begin{equation}
 \mu^l_j=\frac1G\sum_{g=1}^{G}\sum_{h\in\mathcal H_g}w_h a^{l,h}_j
 =\sum_{h=1}^{H_Q}\sum_{u\in\mathcal Q_p}\omega_{h,u}A^{l,h}_{u,j},
 \qquad \omega_{h,u}=\frac{w_h}{GW_p}.
 \label{eq:mixture}
\end{equation}
Thus $\mu^l$ is a probability distribution and $\omega_{h,u}\ge\zeta/(H_QW_p)$. Entropy controls the proposal; the floor explicitly limits how strongly it may suppress a head.

\begin{proposition}[Entropy separation and its floor]
\label{prop:entropy}
Suppose a group contains an eligible head $h_*$ with entropy $H_*$ and $s-1$ other eligible heads with entropy at least $H_*+\Delta$, where $s\le r$ and $\Delta\ge0$. Then
\begin{equation}
 w_{h_*}\ge\frac{\zeta}{r}+\frac{1-\zeta}{1+(s-1)e^{-\Delta/\tau}}.
 \label{eq:gap}
\end{equation}
If an eligible sink-dominated head has a uniform residual on $n$ non-sink positions and an eligible sibling has collision at least $c_0>0$, then
\begin{equation}
 w_{\rm sink}\le\frac{\zeta}{r}+
 (1-\zeta)\frac{n^{-1/\tau}}{n^{-1/\tau}+c_0^{1/\tau}}.
 \label{eq:sink}
\end{equation}
\end{proposition}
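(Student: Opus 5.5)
The plan is to compute the mixture weights of \eqref{eq:weights} directly from the collision sums in \eqref{eq:collision}. The one identity I will use throughout is $C_h^{1/\tau}=\exp(-H^{\rm ns}_{2,h}/\tau)$, which the paper records right after \eqref{eq:weights}. Both inequalities then reduce to monotonicity of the map $(x,y)\mapsto x/(x+y)$ on positive reals: it is increasing in $x$ and decreasing in $y$. After bounding $v_h$, I pass to $w_h=(1-\zeta)v_h+\zeta/r$. This map is affine and nondecreasing in $v_h$ because $\zeta\le1$, so bounds on $v_h$ carry over unchanged.

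For \eqref{eq:gap}, I read the hypothesis as saying that the eligible heads of the group are exactly $h_*$ together with the $s-1$ listed siblings. This reading is needed: an unlisted eligible head with smaller entropy would invalidate the bound. Ineligible heads are excluded from the denominator of \eqref{eq:weights}, so
\begin{equation*}
 v_{h_*}=\frac{e^{-H_*/\tau}}{e^{-H_*/\tau}+\sum_{a\neq h_*}e^{-H_a/\tau}}.
\end{equation*}
Each sibling satisfies $H_a\ge H_*+\Delta$, so $e^{-H_a/\tau}\le e^{-H_*/\tau}e^{-\Delta/\tau}$. Because the ratio decreases in the denominator sum, this gives $v_{h_*}\ge 1/\bigl(1+(s-1)e^{-\Delta/\tau}\bigr)$, and the affine map to $w_{h_*}$ yields \eqref{eq:gap}.

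For \eqref{eq:sink}, I interpret ``uniform residual on $n$ non-sink positions'' as saying that the conditional distribution $\widetilde a^h$ places mass $1/n$ on each of $n$ non-sink positions and zero elsewhere. Then $C_{\rm sink}=n\cdot n^{-2}=n^{-1}$, so $C_{\rm sink}^{1/\tau}=n^{-1/\tau}$. The denominator of $v_{\rm sink}$ contains this term, the sibling's term $C_{\rm sib}^{1/\tau}\ge c_0^{1/\tau}$, and further nonnegative terms. Dropping the extra terms and lowering the sibling term to $c_0^{1/\tau}$ can only increase the ratio, which gives $v_{\rm sink}\le n^{-1/\tau}/(n^{-1/\tau}+c_0^{1/\tau})$. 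Applying the affine map gives \eqref{eq:sink}. This direction is robust: it needs no assumption about how many other eligible heads the group contains.

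Neither calculation is hard. The main obstacle is pinning down the interpretation in the first part, namely that no other eligible head in the group competes with $h_*$. I would state that assumption explicitly. I would also remark that without it the same argument gives the bound with $s-1$ replaced by the total number of competing eligible heads, provided each of them satisfies the entropy gap.
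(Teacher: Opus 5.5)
Your proposal is correct and matches the paper's proof: divide the denominator of $v_{h_*}$ by $e^{-H_*/\tau}$ and bound each competitor by $e^{-\Delta/\tau}$; for the sink head use $C=1/n$ for the uniform residual and the $c_0^{1/\tau}$ lower bound on the sibling term; then pass both bounds through the nondecreasing affine map $v_h\mapsto w_h$. Your explicit assumption that $h_*$ and the $s-1$ listed siblings are the only eligible heads in the group is the reading the paper's one-line argument uses implicitly.
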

\begin{proof}
Divide the denominator of $v_{h_*}$ by $\exp(-H_*/\tau)$. Each competing term is at most $\exp(-\Delta/\tau)$, yielding~\eqref{eq:gap}. A uniform residual has collision $1/n$, whereas the specified sibling contributes at least $c_0^{1/\tau}$ to the denominator. Substitution into~\eqref{eq:weights} gives~\eqref{eq:sink}.
\end{proof}

For fixed positive $\zeta$, the sink weight approaches the floor $\zeta/r$, not zero. Without the floor, its unnormalized factor decays polynomially in $n$. If all siblings have identical diffuse residuals, normalized weights remain uniform. If a sink head's residual is itself sharply concentrated, the uniform-residual premise fails. These qualifications prevent the entropy statistic from being interpreted as a universal retrieval-head detector. The standard R\'enyi family and its monotonicity are discussed by van Erven and Harremo\"es \citep{vanerven2014}; order two is chosen here for its explicit second-moment representation, not for universal statistical optimality.

\subsection{Null-calibrated chunk priorities}

Let $Z_l=\mu^l(\mathcal D)$. When $Z_l<Z_{\min}$, bypass pruning. Otherwise, define document-conditioned probabilities $p_j=\mu^l_j/Z_l$, chunk mass $M_i=\sum_{j\in c_i}p_j$, and null mass $\pi_i=n_i/N$. For smoothing parameter $\varepsilon_s>0$, set
\begin{equation}
 \widehat p_{i,j}=\frac{p_j+\varepsilon_s\pi_i/n_i}{M_i+\varepsilon_s\pi_i},
 \qquad \ell_i=\frac{M_i+\varepsilon_s\pi_i}{(1+\varepsilon_s)\pi_i}.
 \label{eq:chunk-smoothing}
\end{equation}
For $n_i>1$, define
\begin{equation}
 \kappa_i=\frac{n_i\sum_{j\in c_i}\widehat p_{i,j}^{2}-1}{n_i-1},
 \qquad S_i=\ln\ell_i+\lambda\kappa_i,\quad \lambda\ge0,
 \label{eq:score}
\end{equation}
and set $\kappa_i=0$ for singleton chunks. The score determines a proposal ordering; it does not replace discarded-mass constraints.

\begin{proposition}[Calibration and bounded concentration preference]
\label{prop:score}
For every chunk, $0\le\kappa_i\le1$. Uniform attention on $\mathcal D$ gives $S_i=0$ for every chunk. For any two chunks,
\begin{equation}
 S_i-S_j\ge\ln(\ell_i/\ell_j)-\lambda.
 \label{eq:dominance}
\end{equation}
In particular, $\ell_i/\ell_j>e^\lambda$ implies $S_i>S_j$, independently of their within-chunk concentration.
\end{proposition}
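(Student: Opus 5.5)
The three claims reduce to elementary facts about the smoothed within-chunk distribution $\widehat p_{i,\cdot}$ in~\eqref{eq:chunk-smoothing}; I would handle them in the order stated.

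\textbf{Range of $\kappa_i$.} First I check that $\widehat p_{i,\cdot}$ is a probability vector on $c_i$. Its entries are positive because $\varepsilon_s\pi_i>0$. Summing over $j\in c_i$, the numerator gives $M_i+\varepsilon_s\pi_i$, so the entries sum to one. For $n_i>1$ the collision sum $Q_i=\sum_{j\in c_i}\widehat p_{i,j}^2$ then satisfies $1/n_i\le Q_i\le 1$. The lower bound follows from Cauchy--Schwarz, $1=(\sum_j\widehat p_{i,j})^2\le n_iQ_i$. The upper bound holds because $\widehat p_{i,j}^2\le\widehat p_{i,j}$ when $0\le\widehat p_{i,j}\le1$. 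Since $n_iQ_i-1\in[0,n_i-1]$, dividing by $n_i-1>0$ gives $0\le\kappa_i\le1$. Singleton chunks have $\kappa_i=0$ by definition.

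\textbf{Uniform calibration.} Uniform attention on $\mathcal D$ means $p_j=1/N$ for all $j\in\mathcal D$. Then $M_i=n_i/N=\pi_i$, so $\ell_i=\pi_i(1+\varepsilon_s)/((1+\varepsilon_s)\pi_i)=1$ and $\ln\ell_i=0$. Also $\widehat p_{i,j}=(1/N+\varepsilon_s/N)/(\pi_i(1+\varepsilon_s))=1/n_i$. This gives $Q_i=1/n_i$ and hence $\kappa_i=0$, so $S_i=0$ for any $\lambda$. I would note that the definitions need $Z_l\ge Z_{\min}>0$ so that $p_j$ exists, and that $\ell_i>0$ so the logarithm is defined; both hold under the bypass rule and the positive smoothing.

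\textbf{Dominance.} Write $S_i-S_j=\ln(\ell_i/\ell_j)+\lambda(\kappa_i-\kappa_j)$. By the range result, $\kappa_i-\kappa_j\ge -\kappa_j\ge -1$. Since $\lambda\ge0$, this yields~\eqref{eq:dominance}. If $\ell_i/\ell_j>e^\lambda$, the right-hand side is strictly positive, so $S_i>S_j$ whatever the values of $\kappa_i$ and $\kappa_j$ in $[0,1]$.

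There is no real obstacle. The only step needing care is confirming that the smoothing keeps $\widehat p_{i,\cdot}$ normalized, because the upper bound $Q_i\le1$ and the whole range claim rest on it.
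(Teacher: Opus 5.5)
Your proposal is correct and follows the same route as the paper: Cauchy--Schwarz together with $\widehat p_{i,j}^2\le\widehat p_{i,j}$ gives the range of $\kappa_i$, direct substitution gives $\ell_i=1$ and $\widehat p_{i,j}=1/n_i$ under the uniform null, and $\kappa_i-\kappa_j\ge-1$ with $\lambda\ge0$ gives the dominance inequality. Your explicit check that the smoothing keeps $\widehat p_{i,\cdot}$ normalized is a step the paper leaves implicit, and it is exactly what the paper's phrase ``for a probability vector'' needs.
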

\begin{proof}
For a probability vector on $n$ positions, Cauchy--Schwarz gives $\sum p_j^2\ge1/n$, while $p_j^2\le p_j$ gives $\sum p_j^2\le1$. Equation~\eqref{eq:score} yields the range. Under the uniform null, $M_i=\pi_i$, $\widehat p_{i,j}=1/n_i$, and $\ell_i=1$. Finally, $\kappa_i-\kappa_j\ge-1$ proves~\eqref{eq:dominance}.
\end{proof}

The last inequality is a precise, limited safeguard for diffuse evidence: a sufficiently large relative-mass advantage cannot be overturned by concentration alone. It does not assert that all diffuse relevant chunks possess that advantage. Lowering the R\'enyi order would not supply this property, because $H_1(P)\ge H_\alpha(P)\ge H_2(P)$ for $1\le\alpha\le2$ implies the reverse ordering for negative entropies. No adaptive-order rescue claim is required by the present formulation.

\section{Deletion Envelopes and Constrained Selection}

\subsection{From a pooled measure to individual attention rows}

Write $a$ for a proposal attention row and $\omega$ for its positive mixture coefficient in~\eqref{eq:mixture}. For a removal set $E$, let $q=\mu^l(E)$ and $\eta=a(E)$. The relevant divergence is
\begin{equation}
 D_{\chi^2}(a\Vert\mu^l)=\sum_{j:\mu^l_j>0}\frac{(a_j-\mu^l_j)^2}{\mu^l_j}.
 \label{eq:chi}
\end{equation}
Mixture domination implies $a_j=0$ whenever $\mu^l_j=0$, so the expression is well-defined.

\begin{theorem}[Mixture-to-row deletion envelope]
\label{thm:envelope}
For every $E$ and every constituent proposal row,
\begin{equation}
 \eta\le\min\left\{1,\frac q\omega,
 q+\sqrt{D_{\chi^2}(a\Vert\mu^l)q(1-q)}\right\},
 \qquad D_{\chi^2}(a\Vert\mu^l)\le\omega^{-1}-1.
 \label{eq:envelope}
\end{equation}
Consequently, $\max_{1\le h\le H_Q}\max_{u\in\mathcal Q_p}A^{l,h}_u(E)
\le\min\{1,H_QW_pq/\zeta\}$. For head-averaged proposal rows $a^{l,h}$, the corresponding bound is $\min\{1,H_Qq/\zeta\}$.
\end{theorem}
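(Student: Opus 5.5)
The plan is to prove the three terms of the minimum separately and then derive the divergence bound and the two consequences from the mixture structure in \eqref{eq:mixture}.

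\textbf{The three terms.} The bound $\eta\le1$ is immediate, since $a$ is a probability distribution.

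For the ratio bound, write $\mu^l=\omega a+(1-\omega)\nu$ for the remaining mixture. The residual $\nu$ is a nonnegative combination of attention rows, normalized, because the coefficients $\omega_{h,u}$ sum to one and $\omega<1$ unless there is a single constituent. If $\omega=1$, then $a=\mu^l$ and everything is trivial. Domination gives $\mu^l_j\ge\omega a_j$ pointwise. Summing over $E$ yields $q\ge\omega\eta$.

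For the $\chi^2$ term, I use a Cauchy--Schwarz argument with a centered test function. Set $f=\ind_E-q$. Then $\sum_j\mu^l_j f_j=0$, so
\[
\eta-q=\sum_j (a_j-\mu^l_j) f_j .
\]
Writing $(a_j-\mu^l_j)f_j=\frac{a_j-\mu^l_j}{\sqrt{\mu^l_j}}\cdot\sqrt{\mu^l_j}f_j$ over the support of $\mu^l$ and applying Cauchy--Schwarz gives
\[
|\eta-q|\le\sqrt{D_{\chi^2}(a\Vert\mu^l)}\sqrt{\textstyle\sum_j\mu^l_j f_j^2}.
\]
The last factor equals $\sqrt{q(1-q)}$, since it is the variance of an indicator under $\mu^l$. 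Terms with $\mu^l_j=0$ contribute nothing, because $a_j=0$ there. The centering is the one slightly non-obvious step.

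\textbf{The divergence bound.} Expand
\[
D_{\chi^2}(a\Vert\mu^l)=\sum_j a_j^2/\mu^l_j-1 .
\]
Using $\mu^l_j\ge\omega a_j$ gives $a_j^2/\mu^l_j\le a_j/\omega$. Summing, the sum is at most $1/\omega$, so $D_{\chi^2}(a\Vert\mu^l)\le\omega^{-1}-1$.

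\textbf{The consequences.} For a single query row $A^{l,h}_u$, the floor from \eqref{eq:mixture} gives $\omega_{h,u}\ge\zeta/(H_QW_p)$. The ratio term then yields
\[
A^{l,h}_u(E)\le \min\{1,H_QW_pq/\zeta\},
\]
and taking the maximum over $h$ and $u$ gives the first claim.

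For the head-averaged rows $a^{l,h}$, regroup \eqref{eq:mixture} as $\mu^l=\sum_h (w_h/G)\,a^{l,h}$. The coefficient satisfies
\[
\frac{w_h}{G}\ge\frac{\zeta}{rG}=\frac{\zeta}{H_Q}.
\]
This requires $w_h\ge\zeta/r$, which holds in the fallback case too, since there $v_h=1/r$. Applying the same domination argument gives $\min\{1,H_Qq/\zeta\}$.

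The main point to verify carefully is that the floor holds for every head, including ineligible heads and fallback groups; the remaining steps are routine.
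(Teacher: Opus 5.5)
Your proposal is correct and follows essentially the same route as the paper: pointwise domination $\mu^l_j\ge\omega a_j$ for the ratio term, Cauchy--Schwarz applied to $\sum_j(a_j-\mu^l_j)(\ind_E(j)-q)$ (identical to the paper's $\E_{\mu^l}[f(\ind_E-q)]$ with $f=a/\mu^l-1$), the same expansion $D_{\chi^2}=\sum_j a_j^2/\mu^l_j-1\le\omega^{-1}-1$, and substitution of the coefficient floors $\zeta/(H_QW_p)$ and $\zeta/H_Q$. Your explicit check that $w_h\ge\zeta/r$ also holds for ineligible heads ($w_h=\zeta/r$) and for fallback groups ($w_h=1/r$) supplies a step the paper only asserts after \eqref{eq:mixture}.
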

\begin{proof}
The pointwise inequality $\mu^l_j\ge\omega a_j$ gives $\eta\le q/\omega$. On the support of $\mu^l$, put $f_j=a_j/\mu^l_j-1$. Then $\E_{\mu^l}f=0$ and
\[
 \eta-q=\E_{\mu^l}[f(\ind_E-q)].
\]
Cauchy--Schwarz bounds its absolute value by $\sqrt{\E_{\mu^l}f^2\,q(1-q)}$, yielding the third term. Also,
\[
 D_{\chi^2}(a\Vert\mu^l)=\sum_j\frac{a_j^2}{\mu^l_j}-1
 \le\omega^{-1}\sum_j a_j-1=\omega^{-1}-1.
\]
Substitute the row coefficient lower bound. Averaging observers first gives constituent coefficient $w_h/G\ge\zeta/H_Q$, proving the last statement.
\end{proof}

The envelope can be sharp. Let $a$ put all its mass on one position, let another distribution be supported off that position, and let $\mu=\omega a+(1-\omega)b$. For $E=\supp(a)$, $q=\omega$, $\eta=1$, and $D_{\chi^2}=\omega^{-1}-1$; both nontrivial envelope terms equal one. Thus a small pooled mass does not uniformly imply a small head-specific loss when the corresponding mixture weight is small. The factors $H_Q$ and $W_p$ expose a real coverage cost, rather than an artifact to omit from the guarantee.

For a fixed query and value vectors, the envelope yields a direct output statement.
\begin{lemma}[Sharp fixed-row deletion bound]
\label{lem:local}
Let $\eta=a(E)<1$, $\norm{v_j}\le V$, and
\begin{equation}
 o=\sum_j a_jv_j,\qquad
 o_R=\sum_{j\notin E}\frac{a_j}{1-\eta}v_j.
 \label{eq:row-output}
\end{equation}
Then $\norm{o-o_R}\le2V\eta$. The factor two is attainable on the probability simplex.
\end{lemma}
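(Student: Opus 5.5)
The bound comes from writing the deletion error as an explicit difference of two probability-weighted averages of the value vectors. With $\eta<1$, split $o=\sum_{j\in E}a_jv_j+\sum_{j\notin E}a_jv_j$ and set $\bar o_E=\eta^{-1}\sum_{j\in E}a_jv_j$ (only needed when $\eta>0$; if $\eta=0$ then $o=o_R$ and there is nothing to prove). Since $\sum_{j\notin E}a_j=1-\eta$, we have $\sum_{j\notin E}a_jv_j=(1-\eta)o_R$. This gives the identity
\[
 o-o_R=\eta\,\bar o_E+(1-\eta)o_R-o_R=\eta\,(\bar o_E-o_R).
\]
Both $\bar o_E$ and $o_R$ are convex combinations of vectors of norm at most $V$, so by the triangle inequality each has norm at most $V$. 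Hence $\norm{o-o_R}\le\eta(\norm{\bar o_E}+\norm{o_R})\le 2V\eta$. This step is routine. The lemma applies this identity to the single fixed row, and Theorem~\ref{thm:envelope} is what later supplies a bound on $\eta$ itself.

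For attainability, the two averages $\bar o_E$ and $o_R$ must be antipodal vectors of norm $V$. Fix a unit vector $e$ and use two positions. Take $E=\{1\}$, $a=(\eta,1-\eta)$, $v_1=Ve$ and $v_2=-Ve$. Then $o=V(2\eta-1)e$ and $o_R=-Ve$, so $\norm{o-o_R}=2V\eta$ exactly, for every $\eta\in[0,1)$. This shows the factor two cannot be improved on the probability simplex under only a norm bound on the values.

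The only point needing care is that "attainable" should mean equality for each admissible $\eta$, not merely a supremum. The construction above gives equality for each $\eta$, so no limiting argument is needed. The standing restriction $\eta<1$ is exactly what makes $o_R$ well defined, and the case $\eta=0$ is trivial.
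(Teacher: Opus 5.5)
Your proof is correct and takes the same approach as the paper. You use the same decomposition $o-o_R=\eta(\bar o_E-o_R)$, bound both conditional averages by $V$, and attain equality by giving discarded values $Ve$ and retained values $-Ve$, with your two-position example a concrete instance of the paper's construction.
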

\begin{proof}
For $\eta>0$, write $o=(1-\eta)o_R+\eta o_E$. Both conditional averages have norm at most $V$, so $\norm{o-o_R}=\eta\norm{o_E-o_R}\le2V\eta$. The case $\eta=0$ is immediate. Assign every discarded value $Ve$ and every retained value $-Ve$ for a unit vector $e$ to attain equality. This proof requires fixed queries, retained keys, and values; it is not a theorem about later layers.
\end{proof}

\subsection{Budget feasibility and an optimality-gap certificate}

Index proposal constraints by $a=(h,u)$ and let $m_{ai}=\sum_{j\in c_i}A^{l,h}_{u,j}$. A removal decision $x_i\in\{0,1\}$ has token saving $\sum_i n_ix_i$. For row tolerances $\epsilon_a\in[0,1)$ and a minimum retained chunk count $m_0\le k$, the exact screening problem is
\begin{equation}
 \begin{aligned}
 D_l^*=\max_{x\in\{0,1\}^k}\;&\sum_i n_ix_i\\
 \text{subject to }&\sum_i m_{ai}x_i\le\epsilon_a\quad\text{for every }a,\\
 &\sum_i x_i\le k-m_0.
 \end{aligned}
 \label{eq:packing}
\end{equation}
Here $m_0\in\{0,\ldots,k\}$, so the empty removal is always feasible. A document-token target $B_{\rm pref}\in[0,N]$ requires a saving of at least $N-B_{\rm pref}$. Cardinality and token budgets are distinct, and retaining $m_0$ chunks does not establish retention of a reasoning chain.

The reference proposer sorts by increasing $S_i$, breaks ties by decreasing $n_i$ and then by original index, and scans once. It adds a chunk to $E$ only if every row constraint and the count constraint remain satisfied. It stops when the token-saving target is reached or no candidate remains. Its sorting cost is $O(k\log k)$ and its constraint-update cost is $O(kH_QW_p)$ once the chunk-mass table is available. This is a feasible heuristic, not an optimizer for~\eqref{eq:packing}.

\begin{theorem}[Dual certificate for feasible token removal]
\label{thm:dual}
For arbitrary nonnegative multipliers $y_a$ and $t$, define
\begin{equation}
 U_l(y,t)=\sum_a y_a\epsilon_a+t(k-m_0)
 +\sum_i\pos{n_i-\sum_a y_am_{ai}-t}.
 \label{eq:dual}
\end{equation}
Every feasible removal with saving $D_l$ satisfies
\begin{equation}
 D_l\le D_l^*\le U_l(y,t).
 \label{eq:dual-gap}
\end{equation}
If $U_l(y,t)<N-B_{\rm pref}$, the target is infeasible under the stated screening constraints. Otherwise the upper bound alone does not establish feasibility.
\end{theorem}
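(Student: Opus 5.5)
The plan is standard Lagrangian weak duality for the 0--1 packing problem~\eqref{eq:packing}, carried out by hand so that no LP machinery is needed. The first inequality $D_l\le D_l^*$ is immediate: $D_l^*$ is the maximum of the saving over the feasible set, and any feasible removal is a member of that set. (The set is nonempty because $x=0$ is feasible, as noted after~\eqref{eq:packing}.) The substance is therefore the second inequality. I will prove it in the stronger form $\sum_i n_ix_i\le U_l(y,t)$ for every feasible $x$ and every $y\ge0$, $t\ge0$. Taking the maximum over feasible $x$ then gives $D_l^*\le U_l(y,t)$.

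Fix a feasible $x\in\{0,1\}^k$ and nonnegative multipliers. I add and subtract the penalty terms:
\[
 \sum_i n_ix_i=\sum_a y_a\sum_i m_{ai}x_i+t\sum_i x_i+\sum_i x_i\Bigl(n_i-\sum_a y_am_{ai}-t\Bigr).
\]
Each piece is then bounded separately.
\begin{itemize}
\item By the row constraints and $y_a\ge0$, the first sum is at most $\sum_a y_a\epsilon_a$.
\item By the count constraint and $t\ge0$, the second term is at most $t(k-m_0)$.
\item Since $x_i\in\{0,1\}$, the relaxation $x_i\in[0,1]$ is enough to give $x_i c\le\pos{c}$ for every real $c$. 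Applying this termwise bounds the last sum by $\sum_i\pos{n_i-\sum_a y_am_{ai}-t}$.
\end{itemize}
Adding the three bounds yields exactly $U_l(y,t)$ from~\eqref{eq:dual}. This step is the only point needing care: nonnegativity of the multipliers is what lets the constraint inequalities pass through the products with the correct sign. The argument uses only that the entries $m_{ai}$ are real; their nonnegativity is not needed.

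For the target statement, a target $B_{\rm pref}$ is achieved exactly when some feasible $x$ has saving $D_l\ge N-B_{\rm pref}$. Suppose $U_l(y,t)<N-B_{\rm pref}$ for some multipliers. Then every feasible saving satisfies $D_l\le U_l(y,t)<N-B_{\rm pref}$, so the target is infeasible under the screening constraints.

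For the converse caveat, weak duality gives only an upper bound, and integrality gaps can make $U_l(y,t)\ge N-B_{\rm pref}$ while $D_l^*<N-B_{\rm pref}$. A small example exhibits this. Take one row $a$ with $\epsilon_a$ small, and two chunks with $n_1=n_2=1$ and $m_{a1}=m_{a2}=\epsilon_a/2+\delta$, and take $m_0=0$. Then $D_l^*=1$, since removing both chunks would exceed $\epsilon_a$. The LP relaxation value is close to $2$, and no choice of multipliers can bring $U_l$ below that relaxation value. With target saving $2-\delta'$ for small $\delta'$, every $U_l$ stays above the target even though the target is infeasible. This justifies the final sentence of the theorem.
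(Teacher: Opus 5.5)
Your proof is correct and follows the paper's argument: add and subtract the priced constraints, bound the first two pieces by $\sum_a y_a\epsilon_a$ and $t(k-m_0)$ using nonnegativity of the multipliers, and bound each remaining summand by its positive part using only $0\le x_i\le1$. You also derive the target-infeasibility consequence and give an integrality-gap example for the final caveat, which the paper does not write out; the example is valid once $\delta$ is taken small relative to the target slack (for instance $\delta\le\delta'\epsilon_a/4$), so that the relaxation value $2\epsilon_a/(\epsilon_a+2\delta)$ is at least $2-\delta'$.
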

\begin{proof}
For a feasible $x$, subtract and add the constraint prices:
\[
 \sum_i n_ix_i
 =\sum_a y_a\sum_i m_{ai}x_i+t\sum_i x_i
 +\sum_i(n_i-\sum_a y_am_{ai}-t)x_i.
\]
The first two terms are bounded by their priced capacities. Since $0\le x_i\le1$, each final summand is at most the positive part of its coefficient. Taking the maximum over feasible binary $x$ proves the result.
\end{proof}

The certificate requires no assumption of strong duality or integrality. Prices can be chosen directly or obtained from an optional relaxation solver, whose cost must be charged to the implementation. The difference $U_l-D_l$ is a computable upper bound on the proposal's suboptimality in removed tokens. Failure of the greedy proposal to meet a target is not itself a proof that the target is impossible.

\section{Independent Auditing and Adaptive Layer Selection}
\label{sec:audit}

Let $\mathcal L\subseteq\{1,\ldots,L-1\}$ be a nonempty, prespecified monitored-layer set with $J=|\mathcal L|$. For each layer, define its candidate $E_l$ using the unpruned forward trajectory and proposal information only. These counterfactual candidates need not all be executed after a decision is accepted, but their definitions may not depend on audit samples. In particular, audit results may stop execution but may not be used to change candidates, thresholds, or the observer distribution within the same guarantee.

Let $\nu$ be a fixed distribution on user-query positions, for example uniform on the complete query. Fix $\delta\in(0,1)$, $\epsilon_{\rm audit}\in[0,1)$, and an integer $m_a\ge1$. Draw $m_a$ audit observers independently with replacement from $\nu$, independently of the proposal mechanism. At layer $l$, define
\begin{equation}
 \eta_{l,h}=\E_{u\sim\nu}[A^{l,h}_u(E_l)],\qquad
 \widehat\eta_{l,h}=\frac1{m_a}\sum_{s=1}^{m_a}A^{l,h}_{U_s}(E_l),
 \quad b=\sqrt{\frac{\ln(JH_Q/\delta)}{2m_a}}.
 \label{eq:audit}
\end{equation}
The same audit positions may be reused across layers; independence between different layer--head estimates is not required. The samples within an estimate are independent draws from $\nu$ conditional on the prompt, model, and proposal randomness.

\begin{theorem}[Simultaneous observer guarantee]
\label{thm:audit}
With probability at least $1-\delta$ over the audit sampling, every monitored layer and head satisfies
\begin{equation}
 \eta_{l,h}\le\min\{1,\widehat\eta_{l,h}+b\}.
 \label{eq:audit-bound}
\end{equation}
Therefore any layer selected after examining these bounds has $\eta_{l^*,h}\le\epsilon_{\rm audit}$ for every head whenever its acceptance rule requires $\widehat\eta_{l^*,h}+b\le\epsilon_{\rm audit}$.
\end{theorem}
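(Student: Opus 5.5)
The argument will be a one-sided Hoeffding bound for each fixed layer--head pair, followed by a union bound over the $JH_Q$ monitored pairs. The adaptive-selection clause will then follow because the simultaneous event does not depend on which layer is eventually chosen.

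\textbf{Step 1: condition on everything except the audit draw.} I fix the prompt, the model, and the proposal randomness. By the protocol of Section~\ref{sec:audit}, each candidate $E_l$ for $l\in\mathcal L$ is determined by the unpruned trajectory and proposal information only. Hence $E_l$ is a deterministic set under this conditioning, and so is each target $\eta_{l,h}$.

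For a fixed pair $(l,h)$, set $Y_s=A^{l,h}_{U_s}(E_l)$. Since $U_1,\ldots,U_{m_a}$ are i.i.d.\ draws from $\nu$, independent of the proposal, the $Y_s$ are i.i.d. Each $Y_s\in[0,1]$, being the mass that a probability row assigns to a set, and $\E Y_s=\eta_{l,h}$.

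\textbf{Step 2: per-pair tail bound and union bound.} Hoeffding's inequality gives
\[
\Prb\bigl(\eta_{l,h}-\widehat\eta_{l,h}>b\bigr)\le\exp(-2m_ab^2)=\frac{\delta}{JH_Q},
\]
where the equality is exactly the choice of $b$ in~\eqref{eq:audit}. A union bound over the $JH_Q$ pairs then shows that, with probability at least $1-\delta$, $\eta_{l,h}\le\widehat\eta_{l,h}+b$ holds for all pairs simultaneously.

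The union bound requires no independence across pairs. This is why reusing the same audit positions across layers, which correlates the estimates, is harmless. Finally, $\eta_{l,h}\le1$ holds trivially, which gives the $\min\{1,\cdot\}$ in~\eqref{eq:audit-bound}.

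Integrating the conditional probability over the prompt and proposal randomness preserves the bound $1-\delta$.

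\textbf{Step 3: adaptive selection.} Let $\Omega$ be the simultaneous event from Step 2. On $\Omega$, the bound holds for every $l\in\mathcal L$. The selected layer $l^*$, however it depends on the audit data, lies in $\mathcal L$, so the bound holds at $l^*$. If the acceptance rule requires $\widehat\eta_{l^*,h}+b\le\epsilon_{\rm audit}$ for every $h$, then on $\Omega$ we get $\eta_{l^*,h}\le\epsilon_{\rm audit}$ for every $h$.

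\textbf{Main obstacle.} The delicate point is Step 1, not the arithmetic. The Hoeffding bound applies only if $E_l$, and hence the target $\eta_{l,h}$, is fixed before the audit sample is drawn. If audit outcomes were allowed to modify candidates, thresholds, or $\nu$, the $Y_s$ would no longer be i.i.d.\ with mean $\eta_{l,h}$.

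I will therefore state the measurability requirement explicitly: every $E_l$ is measurable with respect to the $\sigma$-algebra generated by the prompt, the model, and the proposal randomness, and that $\sigma$-algebra is independent of $(U_s)$. I will also note that stopping after acceptance is covered, because the counterfactual candidates at unexecuted layers are still defined, so the union over all of $\mathcal L$ is legitimate.
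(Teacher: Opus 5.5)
Your proposal is correct and follows the paper's proof essentially step for step. Both condition on the prompt, model, and proposal randomness, apply a one-sided Hoeffding bound with $e^{-2m_ab^2}=\delta/(JH_Q)$ to each layer--head pair, take a union bound that needs no cross-pair independence, observe that adaptive selection cannot leave the simultaneous event, and finally average over proposal randomness. Your explicit measurability condition on $E_l$ and your remark about counterfactual candidates at unexecuted layers make precise what the paper places in the surrounding protocol text rather than inside the proof itself.
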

\begin{proof}
Conditional on the prompt, model, and proposal mechanism, each audit observation is in $[0,1]$ with mean $\eta_{l,h}$. The one-sided Hoeffding inequality gives
\[
 \Prb(\eta_{l,h}>\widehat\eta_{l,h}+b)\le e^{-2m_ab^2}
 =\frac{\delta}{JH_Q}.
\]
A union bound over the $JH_Q$ pairs establishes a simultaneous event \citep{hoeffding1963}. Selection of a layer within that event cannot invalidate its already established inequality. Averaging the conditional probability statement over proposal randomness proves the unconditional claim.
\end{proof}

This controls expected discarded mass under $\nu$, not the worst observer, every document position, or future decode attention. A deterministic last-token observation does not satisfy the sampling theorem merely because it is called an audit. If $b$ exceeds the configured tolerance, no layer passes; one must enlarge the audit, compute the finite observer population exactly, relax the tolerance with an explicit scope change, or bypass pruning. Using audit outcomes to retune the proposal requires a fresh independent audit or a different uniform-convergence argument.

For a valid candidate, define retained chunks $\mathcal R_l$ and the set-change statistic
\begin{equation}
 d_l=\frac{|\mathcal R_l\triangle\mathcal R_{l^-}|}{\max\{1,|\mathcal R_l\cup\mathcal R_{l^-}|\}},
 \label{eq:stability}
\end{equation}
where $l^-$ is the preceding monitored layer. A streak of $v$ valid transitions with $d_l\le\theta$ is an optional scheduling condition; it is not a correctness theorem. The first valid candidate initializes the predecessor. An invalid proposal resets the streak. The method accepts the earliest layer meeting the token target, the proposal constraints, the stability condition, and the audit threshold. If none passes, the complete prompt continues.

\Needspace{14\baselineskip}
\noindent\textbf{Algorithm 1: EntroPrefill with independent observer audit}
\begin{enumerate}[leftmargin=1.5em,itemsep=1pt,topsep=4pt,label=\arabic*.]
\item Fix protected support, monitored layers, all thresholds, proposal observers, audit sample size, and audit distribution before observing audit values.
\item Execute unpruned blocks sequentially; at each monitored layer, extract proposal attention and compute~\eqref{eq:collision}--\eqref{eq:score}.
\item If residual or document mass is invalid, reset the predecessor and streak and continue without pruning.
\item Construct $E_l$ with the feasible scan for~\eqref{eq:packing}; update the candidate-stability streak independently of audit values.
\item If the saving target or streak requirement fails, continue; otherwise evaluate the fixed independent audit and~\eqref{eq:audit-bound}.
\item If every head passes, set $l^*=l$, gather hidden states and shallow KV on $R=[T]\setminus E_l$, preserve original positions, and process all remaining blocks on $R$.
\item If the monitored window ends without acceptance, finish ordinary full-context prefill. Decode using the resulting cache and the configured page policy.
\end{enumerate}

\section{Conditional Transformer Output Stability}
\label{sec:network}

\subsection{Why a shallow certificate is insufficient}

\begin{proposition}[No uniform output bound from shallow attention]
\label{prop:impossible}
Consider attention networks that can preserve a token-specific value coordinate through a shallow prefix and use an unconstrained later query--key map. There exist networks and prompt pairs with identical shallow attention and identical retained states for which the shallow attention mass on a removed position tends to zero, while at least one future-output deletion error remains bounded away from zero. Therefore no bound depending only on shallow discarded attention can vanish uniformly over this class.
\end{proposition}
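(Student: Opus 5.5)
The plan is an explicit construction. I will build a two-stage attention network: a shallow block whose attention rows we control, and a deeper block whose query--key map is unconstrained. Consider a prompt with one protected observer position $u$ (the final query token) and a single candidate chunk consisting of one document position $j_0$. The shallow layer $l$ uses keys such that the observer's logit on $j_0$ is $-s$ and its logit on the protected positions is $0$. Then $A^{l,h}_{u,j_0}=e^{-s}/(e^{-s}+|\mathcal P|)\to0$ as $s\to\infty$, and every constraint in \eqref{eq:packing} and every audit quantity in \eqref{eq:audit} for $E=\{j_0\}$ tends to zero. By assumption, the network preserves a token-specific value coordinate through the shallow prefix. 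For example, a residual stream coordinate $e_1$ carries a feature $\phi(x_{j_0})\in\{+1,-1\}$ that is unchanged by the shallow blocks, because their value/output maps act on other coordinates.

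The second step uses the deeper block at layer $l'>l$. I choose a query--key map that reads a ``retrieval'' coordinate, present only at the observer and at $j_0$, with a large gain $\beta$. This makes the layer-$l'$ attention of the observer concentrate on $j_0$ with mass $\ge 1-e^{-\beta}$, independently of $s$. The value map copies the $e_1$ coordinate, so the full-context output at the observer is $\approx\phi(x_{j_0})e_1$. In the pruned run, $j_0$ is absent from the retained support $R$. The observer then attends only over protected positions, whose $e_1$ coordinate is $0$, and the output coordinate is $0$. A linear readout $e_1^\top$ followed by the remaining blocks acting as identity on the residual makes the future output error at least $1-e^{-\beta}-o(1)$, bounded away from zero.

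The third step produces the prompt pair. I take two prompts that differ only in the token at $j_0$, one with $\phi=+1$ and one with $\phi=-1$. The shallow logits do not depend on $\phi$, since the shallow query--key map ignores $e_1$. So the shallow attention rows coincide, and the retained states $X^{l}|_R$ are identical because $R$ excludes $j_0$. The pruned outputs are therefore equal, while the full outputs differ by about $2$. Hence any bound that is a function only of shallow discarded attention (and retained states) must permit an error of at least $1-o(1)$ on one of the two prompts as $s\to\infty$. It therefore cannot vanish uniformly, which is the claim.

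The main obstacle is ensuring that the shallow attention of the observer does not itself feed information from $j_0$ into the retained states. Otherwise the retained states would differ between the prompts. I will handle this by taking the feature coordinate $e_1$ to lie in the kernel of all shallow value maps, so that retained states are exactly $\phi$-independent, not merely up to $e^{-s}$. I also need to check that pre-normalization does not rescale $e_1$ in a $\phi$-dependent way. Choosing $\phi=\pm1$ with all other coordinates fixed keeps the norm identical, so LayerNorm/RMSNorm acts identically on both prompts.
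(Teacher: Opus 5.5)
Your construction is correct and is essentially the paper's proof. It uses two prompts that differ only in a signed coordinate at the removed position, a shallow prefix that gives that position vanishing mass while neither reading nor moving the coordinate, a later head that concentrates on the position and copies the sign, and the pair argument: identical reduced outputs face reference outputs of opposite sign, so one error stays bounded away from zero.

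Two small repairs are needed.

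\emph{Normalization.} The paper simply zeroes every shallow output projection and feed-forward map, which makes agreement of retained states immediate. Your equal-norm remark holds for RMSNorm but not for LayerNorm. LayerNorm's mean subtraction spreads $\phi$ into every coordinate, and it changes the centered norm unless the other coordinates sum to zero. So you should either use RMSNorm or choose weights that annihilate that leaked direction.

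\emph{Deep head.} The deep head needs a key coordinate present at $j_0$ but not at the observer, as in the paper's ``separate position-specific key coordinate.'' If a single retrieval coordinate is read by both the query and the key, the observer's self-logit ties with $j_0$'s. That caps the mass on $j_0$ near $1/2$ rather than $1-e^{-\beta}$. The conclusion survives, only with a smaller constant.
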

\begin{proof}
Take two prompts differing only in a value coordinate at a position $j^*$, and remove $E=\{j^*\}$. Set every shallow attention-output projection and feed-forward map to zero, so the shallow prefix is the identity. Choose shallow query and key maps that ignore the value coordinate and give mass $\varepsilon>0$ to $j^*$. The retained states and shallow scores agree between prompts, whereas the removed values are $+V$ and $-V$ for $V>0$. A later head uses a separate position-specific key coordinate to give reference mass $1-\gamma$ to $j^*$; its value map reads only the signed coordinate. Set other values to zero. The two reference outputs are $+(1-\gamma)V$ and $-(1-\gamma)V$, whereas reduced outputs agree because their inputs agree. The triangle inequality makes at least one error at least $(1-\gamma)V$. Taking $\varepsilon\to0$ with fixed $\gamma<1$ proves the claim; taking $\gamma\to0$ makes the lower bound approach $V$. A two-logit readout can make the reference first-token decisions differ. This construction concerns attention-based evidence for deleting $E$; it does not assert that every conceivable pruning rule chooses $E$.
\end{proof}

The proposition does not claim impossibility for a fixed model whose later weights and behavior are additionally constrained. It identifies the missing premise in an unconditional inference from shallow concentration. The following theorem states sufficient premises for that restricted setting.

\subsection{Block constants and deletion defects}

Consider pre-normalized blocks
\begin{equation}
 B_l(X)=X+O_l\operatorname{Attn}_l(N_l(X)),\qquad
 F_l(X)=B_l(X)+\operatorname{MLP}_l(\overline N_l(B_l(X))).
 \label{eq:block}
\end{equation}
Heads are concatenated before $O_l$. Let $c_l$ and $\bar c_l$ be rowwise Lipschitz constants for the two normalizers. Let $f_l$ bound the MLP Lipschitz constant on a convex enclosure of its normalized input domain, and put $R_l=1+f_l\bar c_l$. Assume normalized-domain query, key, and value norms are bounded by $\overline Q_{l,h}$, $\overline K_{l,h}$, and $V_{l,h}$. The matrices $W^Q_{l,h}$, $W^K_{l,h}$, and $W^V_{l,h}$ include the appropriate shared group projections. Biases are absorbed into the norm bounds. Fixed RoPE rotations preserve these norms.

Define sufficient constants
\begin{align}
 a_{l,h}&=\frac{c_l}{\sqrt d}
 \left(\overline Q_{l,h}\norm{W^K_{l,h}}+
 \overline K_{l,h}\norm{W^Q_{l,h}}\right),\label{eq:a-const}\\
 b_{l,h}&=c_l\norm{W^V_{l,h}}+2V_{l,h}a_{l,h},\qquad
 K_l=R_l\left(1+\norm{O_l}\sqrt{\sum_h b_{l,h}^2}\right).
 \label{eq:k-const}
\end{align}
For a fixed removal $E$ and retained set $R$, let $X^l$ be the full reference trajectory and
\begin{equation}
 \eta_{l,h}^{\rm ref}=\max_{u\in R}\sum_{j\in E}A^{l,h}_{u,j}(X^{l-1}),
 \qquad e_l=2R_l\norm{O_l}\sqrt{\sum_h V_{l,h}^2(\eta_{l,h}^{\rm ref})^2}.
 \label{eq:local-defect}
\end{equation}
The maximum includes \emph{every} retained query row, not only the user-query observers. Protected causal positions guarantee nonempty retained attention support.

\begin{theorem}[Conditional first-token perturbation]
\label{thm:network}
Under the stated domain bounds, let the reduced trajectory start at $Y^{l^*}=X^{l^*}|_R$ and apply the same blocks on support $R$ with original position IDs. Then
\begin{equation}
 \Delta_L:=\norm{X^L|_R-Y^L}_{\max,2}
 \le\sum_{j=l^*+1}^{L}e_j\prod_{s=j+1}^{L}K_s.
 \label{eq:propagation}
\end{equation}
If the final query-row readout is $\Lambda$-Lipschitz into the logit $\ell_\infty$ norm, then
\begin{equation}
 \norm{z-\widetilde z}_\infty\le\Lambda\Delta_L.
 \label{eq:logit-bound}
\end{equation}
\end{theorem}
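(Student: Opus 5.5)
The plan is to prove a one-step recursion $\Delta_l\le K_l\Delta_{l-1}+e_l$ for every $l>l^*$, with $\Delta_{l^*}=0$ because $Y^{l^*}=X^{l^*}|_R$. Unrolling this affine recursion from $l^*$ to $L$ gives \eqref{eq:propagation} directly. The logit bound \eqref{eq:logit-bound} then follows at once: the final query row lies in $R$, since protected positions are retained, and the readout is $\Lambda$-Lipschitz into $\ell_\infty$. All the work is in the one-step inequality. Throughout, $X=X^{l-1}$ denotes the full trajectory, $X|_R$ its restriction, and $Y=Y^{l-1}$ the reduced trajectory.

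\textbf{Attention sublayer.} Fix a retained row $u\in R$ and a head $h$. I would compare the full attention output $o_u(X)$ with the reduced output $o_u^R(Y)$ through the intermediate quantity $o_u^R(X|_R)$. This is attention on the reduced support, evaluated at the reference states.

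\emph{Deletion defect.} With queries, keys and values fixed, $o_u^R(X|_R)$ is exactly the renormalized row in \eqref{eq:row-output}. Causal masking and original position IDs are unaffected by restriction, so the retained scores are unchanged. Lemma~\ref{lem:local} then gives $\norm{o_u(X)-o_u^R(X|_R)}\le 2V_{l,h}\eta^{\rm ref}_{l,h}$, because $\eta^{\rm ref}_{l,h}$ is the maximum over all $u\in R$.

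\emph{Drift term.} On the common support $R$, I would bound $\norm{o_u^R(X|_R)-o_u^R(Y)}$ by a standard softmax-attention Lipschitz argument.
\begin{itemize}
\item Each normalized row moves by at most $c_l\Delta_{l-1}$.
\item Each logit $z_{u,j}$ therefore moves by at most $a_{l,h}\Delta_{l-1}$, by the bilinear estimate $|q\cdot k-q'\cdot k'|\le \overline Q\norm{W^K}\norm{\delta x_j}+\overline K\norm{W^Q}\norm{\delta x_u}$. RoPE is an isometry, so it does not affect this.
\item A uniform logit perturbation $\epsilon$ changes a softmax row by at most $2\epsilon$ in total variation to first order. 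Integrating along the segment makes this exact in the form $\norm{p-p'}_1\le 2\max_j|z_j-z'_j|$, via $\tfrac{d}{dt}p=p\odot(\dot z-\langle p,\dot z\rangle)$.
\item Combining these with $\norm{v_j}\le V_{l,h}$ and $\norm{W^V}c_l\Delta_{l-1}$ for the value drift yields the term $b_{l,h}\Delta_{l-1}$.
\end{itemize}

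\emph{Combining heads.} Concatenating heads and applying $O_l$ gives a rowwise bound $\norm{O_l}\bigl(\sqrt{\sum_h b_{l,h}^2}\,\Delta_{l-1}+\sqrt{\sum_h(2V_{l,h}\eta^{\rm ref}_{l,h})^2}\bigr)$, using the triangle inequality in $\ell_2$ over heads. Adding the residual stream gives $\norm{B_l(X)|_R-B_l(Y)}_{\max,2}\le(1+\norm{O_l}\sqrt{\sum b^2})\Delta_{l-1}+\tfrac{e_l}{R_l}$.

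\textbf{MLP sublayer.} This acts rowwise. The convex-enclosure assumption lets me apply the Lipschitz constant $f_l$ along segments, and $\bar c_l$ handles the normalizer. So $F_l$ multiplies the row deviation by at most $R_l=1+f_l\bar c_l$, which yields $\Delta_l\le K_l\Delta_{l-1}+e_l$.

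The main obstacle is the drift step. Its worst-case constant must match $a_{l,h}$ and $b_{l,h}$ exactly, which requires the sharp total-variation Lipschitz bound for softmax under $\ell_\infty$ logit perturbations. It also needs the norm bounds $\overline Q,\overline K,V$ to hold for both trajectories. I would read the ``domain bounds'' premise as covering the reduced trajectory's normalized rows as well, and I would state this explicitly.
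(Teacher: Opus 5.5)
Your proposal is correct and follows essentially the paper's proof. Both obtain the recursion $\Delta_l\le K_l\Delta_{l-1}+e_l$ by splitting the error into a deletion defect at the reference states, handled by Lemma~\ref{lem:local}, and a drift term, handled by the appendix Lipschitz argument. The only difference is cosmetic: you add and subtract at the attention-sublayer output and apply $R_l$ once, whereas the paper adds and subtracts $F_{l,R}(X^{l-1}|_R)$ at the full-block level, and both give identical constants. One wording point: the softmax bound $\norm{p-p'}_1\le2\norm{z-z'}_\infty$ you invoke is valid but deliberately conservative rather than sharp, and that is all the constants $b_{l,h}$ require.
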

\begin{proof}
Appendix~\ref{app:constants} proves that the retained-support block $F_{l,R}$ is $K_l$-Lipschitz. Apply the full and restricted block to the same reference retained states. For each retained query, its query and surviving keys and values agree; restricting softmax to $R$ renormalizes the surviving probabilities. Lemma~\ref{lem:local} bounds the difference for head $h$ by $2V_{l,h}\eta_{l,h}^{\rm ref}$. Concatenation, output projection, and the tokenwise post-attention map give
\[
 \norm{F_l(X^{l-1})|_R-F_{l,R}(X^{l-1}|_R)}_{\max,2}\le e_l.
\]
Add and subtract $F_{l,R}(X^{l-1}|_R)$ to obtain $\Delta_l\le e_l+K_l\Delta_{l-1}$. The initial discrepancy is zero. Unrolling the recurrence proves~\eqref{eq:propagation}; the readout Lipschitz inequality proves~\eqref{eq:logit-bound}.
\end{proof}

\begin{corollary}[Decision margin]
\label{cor:margin}
Let the full-context first-token winner have logit margin
$\mathfrak m=z_{(1)}-z_{(2)}>0$. If $\mathfrak m>2\Lambda\Delta_L$, the reduced and full computations have the same greedy first token.
\end{corollary}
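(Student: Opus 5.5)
The plan is to combine the logit perturbation bound~\eqref{eq:logit-bound} of Theorem~\ref{thm:network} with an elementary argmax-stability argument for the $\ell_\infty$ norm. Let $z$ be the full-context first-token logits and $\widetilde z$ the reduced logits, and let $i^*$ be the full-context winner, so that $z_{i^*}-z_i\ge\mathfrak m$ for every $i\ne i^*$. Write $\rho=\Lambda\Delta_L$. By~\eqref{eq:logit-bound}, $|z_i-\widetilde z_i|\le\rho$ for every coordinate $i$.

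The key step is a coordinatewise comparison. For any $i\ne i^*$,
\[
 \widetilde z_{i^*}-\widetilde z_i\ge (z_{i^*}-\rho)-(z_i+\rho)=z_{i^*}-z_i-2\rho\ge\mathfrak m-2\Lambda\Delta_L>0 .
\]
This uses the lower perturbation on the winner and the upper perturbation on each competitor. The factor two appears because both logits may move by $\rho$ in opposite directions. Strict positivity follows from the hypothesis $\mathfrak m>2\Lambda\Delta_L$. Hence $i^*$ is the unique maximizer of $\widetilde z$, and the greedy first tokens coincide.

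Two points need care rather than real effort. First, the argument should state which query row is read out: the final query row must lie in $R$, which holds because the user query is protected. That row is therefore among those controlled by the row-maximum norm in~\eqref{eq:propagation}, so the $\Lambda$-Lipschitz readout applies to it. Second, the hypothesis $\mathfrak m>0$ makes the full-context winner unique, so no tie-breaking convention is needed on either side. The strict inequality then rules out ties in the reduced computation as well.

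I expect no genuine obstacle. The only subtle point is that the conclusion inherits every premise of Theorem~\ref{thm:network}: the domain bounds and the defects $e_l$ computed over all retained query rows. The corollary is therefore conditional in exactly the same sense, and does not follow from the proposal rows or from the audit guarantee alone.
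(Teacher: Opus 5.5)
Your proposal is correct and follows the paper's own argument: the winner's logit can drop by at most $\Lambda\Delta_L$ and each competitor's can rise by at most $\Lambda\Delta_L$, so a margin of at least $\mathfrak m-2\Lambda\Delta_L>0$ remains. Your two extra remarks are sound clarifications that the paper leaves implicit. The final query row lies in $R$, so $\Delta_L$ controls it, and strictness rules out ties on both sides.
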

\begin{proof}
The winning logit decreases by at most $\Lambda\Delta_L$, and any competing logit increases by at most that amount. The remaining margin is strictly positive.
\end{proof}

This is a first-token statement. Extending it to a generated sequence requires new bounds on every shared generation prefix and sufficient margins at every step. The theorem supplies neither those future margins nor equality of sampled generations.

\subsection{A sufficient bridge from the probe to deeper layers}

A precise support-transfer assumption can connect the probe measure to~\eqref{eq:local-defect}. Suppose that for each deeper layer and head, every retained reference row satisfies
\begin{equation}
 A^{l,h}_u(D')\le\Gamma_{l,h}\mu^{l^*}(D')+\sigma_{l,h}
 \quad\text{for every }D'\subseteq\mathcal D,
 \label{eq:domination}
\end{equation}
where $\Gamma_{l,h}\ge0$ and $\sigma_{l,h}\ge0$ are known bounds. Then
\begin{equation}
 \eta_{l,h}^{\rm ref}\le\min\{1,\Gamma_{l,h}q+\sigma_{l,h}\},
 \qquad q=\mu^{l^*}(E).
 \label{eq:bridge}
\end{equation}
Substitution into~\eqref{eq:local-defect} and Theorem~\ref{thm:network} gives a probe-dependent output bound. A sufficient construction is a density ratio $A^{l,h}_{u,j}/\mu^{l^*}_j\le\Gamma_{l,h}$ on documents, with $\sigma_{l,h}=0$. More generally, the total positive excess $\sum_{j\in\mathcal D}\pos{A^{l,h}_{u,j}-\Gamma_{l,h}\mu^{l^*}_j}$ can be bounded by $\sigma_{l,h}$.

Equation~\eqref{eq:domination} is an assumption about later reference attention, not an output of the shallow audit. It can be loose or fail to provide a useful bound: tiny probe probabilities can require large density ratios, and products of $K_l$ can amplify small defects. The implemented acceptance rule certifies its stated observer quantities; an end-to-end certificate additionally requires justified bounds of this stronger kind. In particular, merely measuring stable chunk ranks does not establish~\eqref{eq:domination}.

\section{GQA Unions and Physical Page Budgets}

If each sibling head independently selects $s$ token positions and the group retains every selection, then
\begin{equation}
 \UOR_g=\frac{|\bigcup_{h\in\mathcal H_g}\mathcal K_h|}{s},
 \qquad 1\le\UOR_g\le\min\{r,T/s\}.
 \label{eq:uor}
\end{equation}
For two equal-cardinality sets with Jaccard disagreement $D$, inclusion--exclusion gives $\UOR=2/(2-D)$. These are set identities, not claims about all implementations of an existing cache policy. A policy that already shares one selection within each physical group does not incur this particular intra-group union.

For decode integration, let $p_{l,g}$ be a normalized group importance distribution on resident tokens and let page capacity be $b$. Page-EntroKV motivates the page priority
\begin{equation}
 I_{l,g}(P)=\max_{j\in P}p_{l,g}(j).
 \label{eq:page-max}
\end{equation}
Protect sink and recent pages, including the writable page, and fill a specified total quota with top-scoring remaining pages. Exact quota cardinality requires enough available pages and a quota at least as large as the protected set. Selecting one common page set for all sibling query heads makes the intra-group UOR one by construction. It does not prove that block-max is statistically preferable to page mass for all tasks.

\begin{proposition}[Competitive page retention and allocation]
\label{prop:pages}
If $K\ge1$ nonprotected pages are selected by block-max and a token has pooled mass greater than $1/(K+1)$, its page is retained. Separately, if $G$ groups each choose $K$ logical pages from $N_P$ candidates, their union satisfies
\begin{equation}
 K\le U_l:=|\bigcup_g\mathcal K_{l,g}|\le\min\{GK,N_P\}.
 \label{eq:page-union}
\end{equation}
\end{proposition}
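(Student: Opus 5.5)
The plan splits along the two independent claims of Proposition~\ref{prop:pages}. Both are elementary; the only care needed is fixing what ``pooled mass'' and ``nonprotected'' refer to.

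\textbf{Retention.} I will argue by contradiction using the normalization of $p_{l,g}$. Suppose a token $j$ has $p_{l,g}(j)>1/(K+1)$ but its page $P_j$ is not retained. Then $P_j$ is nonprotected and lies outside the top-$K$ set under the block-max priority~\eqref{eq:page-max}. Hence each of the $K$ selected pages $P_1,\ldots,P_K$ satisfies $I_{l,g}(P_m)\ge I_{l,g}(P_j)\ge p_{l,g}(j)$. For each selected page I pick a witness token $j_m\in P_m$ attaining the maximum, so $p_{l,g}(j_m)>1/(K+1)$. Pages are disjoint, so the witnesses $j_1,\ldots,j_K$ and $j$ are $K+1$ distinct resident tokens. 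Their total mass therefore exceeds $(K+1)\cdot\frac1{K+1}=1$, which contradicts $\sum p_{l,g}=1$ because all masses are nonnegative.

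Two points need care here. First, ties in the top-$K$ ranking: the argument only uses the weak inequality $I(P_m)\ge I(P_j)$ together with the strict bound on $p(j)$, so any deterministic tie-breaking is fine. Second, when fewer than $K$ nonprotected pages are available, every nonprotected page is retained and the claim holds trivially. I will state that the distribution $p_{l,g}$ is the same one used for ranking; this is how I read ``pooled mass.''

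\textbf{Allocation.} Each group selects a set $\mathcal K_{l,g}$ of exactly $K$ distinct logical pages from a common pool of $N_P$ candidates, which assumes $K\le N_P$ as the exact-quota premise. For the lower bound, the union contains any single $\mathcal K_{l,g}$, so $U_l\ge K$. For the upper bound, subadditivity of cardinality gives $U_l\le\sum_g|\mathcal K_{l,g}|=GK$. In addition, the union is a subset of the candidate pool, so $U_l\le N_P$. Both upper bounds are attainable: disjoint selections give $GK$ when $GK\le N_P$, and identical selections give $K$. I may remark on this to show tightness, although it is not required.

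\textbf{Main obstacle.} I expect the only real obstacle to be conceptual rather than technical. The retention step requires that the ranked quantity is a per-token maximum of a \emph{normalized} distribution over the tokens that pages partition, and that protected pages are excluded from the $K$ competitors. Once those conventions are fixed, the pigeonhole count closes the argument.
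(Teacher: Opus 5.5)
Your proposal is correct and follows the paper's proof. The retention claim is the same pigeonhole contradiction: the $K$ selected pages each have block-max at least $p_{l,g}(j)>1/(K+1)$, so together with the token's own page they would carry more than total probability one. You count one witness token per page where the paper counts page masses, which makes no difference, and your union bounds come from one selected set (lower bound), subadditivity, and the finite candidate pool, exactly as in the paper.
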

\begin{proof}
If the token's page were omitted, $K$ distinct competing pages would have maxima at least as large. Those pages and its own page would contribute more than $(K+1)/(K+1)=1$ total probability, a contradiction. For the union, any one selected set supplies the lower bound, while subadditivity and the finite candidate pool supply the upper bounds.
\end{proof}

If pages are independently allocated per group, their KV payload is $2edb\sum_{l,g}K_{l,g}$ bytes for $e$ bytes per element. If a physical layer block contains all $G$ groups, payload under union materialization is instead $2edbG\sum_l U_l$. If allocation also spans layers, its actual release unit must replace this layer-local accounting. Neither formula includes metadata, padding outside the nominal block, quantization scales, allocator reservation, or transport headers. Whole-page selection avoids token-level holes introduced by partial deletion, but it does not eliminate tail slack or irrelevant tokens co-resident with a retained fact. Group-level UOR equal to one is therefore not a zero-fragmentation theorem.

Decode weights must be initialized for each managed layer; shallow-layer weights cannot stand in for the entire cache. Recomputing scores from a compressed cache accesses resident tokens only. Recovering evicted tokens requires an explicit offload or recomputation mechanism. Prefill pruning and decode eviction can share scoring concepts, but deleting positive attention mass and renormalizing it is not measure preservation.

\section{Computational and Serving Consequences}

\subsection{Prefill break-even and score-extraction cost}

Let the dominant full-layer floating-point cost be $F_l(T)=a_lT+b_lT^2$, with nonnegative coefficients and the linear term including projections and feed-forward computation. For accepted pruning,
\begin{equation}
 F_{\rm EP}=\sum_{l\le l^*}F_l(T)+\sum_{l>l^*}F_l(T')+F_{\rm aux}.
 \label{eq:flops}
\end{equation}
Arithmetic saving is positive precisely when the removed deep-layer arithmetic exceeds $F_{\rm aux}$. For identical layers and $\xi=T'/T$, the ideal saving before auxiliary work is
\begin{equation}
 \left(1-\frac{l^*}{L}\right)
 \frac{aT(1-\xi)+bT^2(1-\xi^2)}{aT+bT^2}.
 \label{eq:savings}
\end{equation}
For positive total cost it lies between $(1-l^*/L)(1-\xi)$ and $(1-l^*/L)(1-\xi^2)$. With $L=32$, $l^*=4$, and $\xi=1/2$, these endpoints are $43.75\%$ and $65.625\%$. They are analytic examples, not measured reductions. A nonzero dense shallow window preserves quadratic asymptotic attention complexity in $T$.

Exact proposal-row extraction can require $O(H_QW_pTd)$ dot-product work per monitored layer. An explicit reference path stores the averaged head distributions in $O(H_QT)$ space and the row--chunk mass table in $O(H_QW_pk)$ space. Entropy weights are available only after the head statistics are formed, so exact pooling generally needs a subsequent pass. Audit rows add their own capture or recomputation cost. A full $T\times T$ attention matrix is unnecessary, but zero intermediate HBM traffic is not implied.

For one attention row, stable online accumulators satisfy
\begin{align}
 \ell'&=e^{m-m'}\ell+\sum_{j\in\mathrm{tile}}e^{z_j-m'},\nonumber\\
 s'&=e^{2(m-m')}s+\sum_{j\in\mathrm{tile}}e^{2(z_j-m')},
 \qquad \sum_j A_j^2=s/\ell^2,
 \label{eq:online}
\end{align}
where $m'$ is the updated maximum. This computes collision for one row. It does not compute collision after observer or head averaging, since averaging before squaring introduces cross-products. A fused kernel must implement the specified pooled statistic or explicitly introduce a separately analyzed approximation. Register pressure, shared-memory use, L2 displacement, kernel launches, and synchronization require measurement.

Compaction gathers retained states and shallow KV entries and preserves token order and original position IDs. The next generated position follows the original prompt range, not the compact storage length. Pure gathers contribute byte traffic rather than floating-point arithmetic; temporary allocations and copies must enter latency and peak-memory accounting. Retained states still encode earlier interaction with the full prompt, so this path is not equivalent to textual truncation followed by a fresh forward pass.

\subsection{Disaggregation and queueing}

DistServe motivates distinct prefill and decode resource pools, while Sarathi-Serve supplies a relevant colocated scheduling alternative \citep{zhong2024,agrawal2024}. With a dense retained cache and no replication or quantization metadata, transfer payload is $2eLGdT'$ bytes. A page-based implementation must use allocated slots and its actual group/layer union instead. If decode selection occurs only after transfer, it does not reduce the bytes already sent.

Writing actual wire payload as $B_{\rm wire}$ and achieved bandwidth as $\beta_{\rm eff}$ gives $t_{\rm xfer}\ge B_{\rm wire}/\beta_{\rm eff}$. Queueing, layout conversion, and setup are additional costs. Emitting the first token on the prefill worker can move transfer delay into the first-to-second-token interval rather than removing it. Different requests also prune at different layers, requiring a defined batch-cohort or device-routing policy; padding retained lengths alone does not resolve depth divergence. Tensor-parallel grouping can introduce score reductions and index broadcasts. Thus a compute reduction, an allocation reduction, and a goodput improvement are separate propositions.

\section{Scope and Planned Empirical Validation}

This manuscript establishes a theoretical procedure and conditional guarantees. It reports no task benchmark, kernel speedup, throughput increase, or empirical noninferiority result. Theoretical examples and implementation-level algebra checks must not be presented as model experiments. The next empirical phase should determine whether entropy priorities obtain better feasible removals than mass-only or uniform priorities, whether the audit is nonvacuous at practical sample sizes, and whether the conditional stability constants explain observed perturbations.

Evaluation should compare full-context inference, prompt compression, fixed-layer pruning, LazyLLM, SlimInfer, ASL, and implementation-appropriate cache baselines under matched token and physical-byte budgets. Single-hop retrieval, multi-hop bridges, diffuse evidence, contradictory answer strings, and reordered documents should be analyzed separately. Key diagnostics are proposal saving $D_l$, dual gap $U_l-D_l$, audit coverage, actual future discarded masses, density-ratio excess, first-token logit margins, full support-chain retention, and bypass frequency. A held-out calibration protocol must fix all thresholds before the reported test evaluation; an input-level observer confidence statement is not a guarantee across future requests or model families.

Accuracy preservation should use a prespecified noninferiority margin and paired uncertainty estimates. Systems measurements should include synchronized TTFT, first-to-second-token delay, decode latency, request completion time, peak allocated and reserved memory, transfer bytes, and SLO goodput at equal total GPU budgets. Weight precision, checkpoint revision, workload distribution, compilation, warmup, and arrival process must be fixed or disclosed. An integrated EntroPrefill--Page-EntroKV--disaggregation comparison needs component ablations rather than a single combined speedup.

The primary limitations are structural. Concentrated attention can select a false fact. Bridge relevance can emerge only in deeper layers. An independent audit controls its observer distribution and can be too conservative for short queries. The full-model bound can be vacuous because of large Lipschitz products or poor support transfer. Irreversible deletion limits recovery during long generation and multi-turn reuse. Finally, kernel compatibility, page release semantics, and distributed scheduling remain implementation obligations. These limitations identify the conditions under which the proposed theory can become useful, rather than being exceptions hidden behind a universal preservation claim.

\section{Conclusion}

EntroPrefill couples a sink-isolated R\'enyi proposal with explicit deletion constraints and independent adaptive-layer auditing. The analysis characterizes the coverage lost through head pooling, bounds attainable token removal, and states sufficient conditions for first-token output stability. A complementary counterexample prevents shallow attention from being interpreted as an unconditional certificate. Physical allocation and serving costs remain explicit in the composition with page eviction and disaggregation. Establishing practical advantage requires implementing this exact procedure and testing whether its guarantees are informative at useful pruning budgets.

\appendix
\section{Proof of the Transformer Block Constants}
\label{app:constants}

\subsection{Normalizer and softmax bounds}

For regularized LayerNorm in dimension $d_m$, let $P=I-\mathbf1\mathbf1^\top/d_m$, $z=Px$, and $v=\norm{z}^2/d_m+\epsilon_{\rm LN}$, with $\epsilon_{\rm LN}>0$. With learned scale $\gamma$ and bias $\beta$,
\begin{equation}
 N(x)=\gamma\odot\frac{z}{\sqrt v}+\beta,\qquad
 J_N(x)=\diag(\gamma)\left(\frac P{\sqrt v}-\frac{zz^\top}{d_m v^{3/2}}\right).
 \label{eq:ln-jac}
\end{equation}
On the mean-zero subspace, directions orthogonal to $z$ have eigenvalue $1/\sqrt v$ before scaling, while the direction of $z$ has eigenvalue $\epsilon_{\rm LN}/v^{3/2}$. The constant direction has eigenvalue zero. Consequently,
\begin{equation}
 \norm{J_N(x)}\le\frac{\norm{\gamma}_\infty}{\sqrt{\epsilon_{\rm LN}}},
 \qquad \norm{N(x)}\le\norm{\gamma}_\infty\sqrt{d_m}+\norm{\beta}.
 \label{eq:ln-constants}
\end{equation}
The first is a global Lipschitz bound; the second supplies a bounded normalized domain for query, key, and value projections. RMS normalization admits the corresponding argument with $P=I$. The constants can be very large at small numerical regularization and should not be confused with tight empirical estimates.

For $p=\softmax(z)$ and perturbation direction $v$, its Jacobian gives $(Jv)_i=p_i(v_i-\sum_jp_jv_j)$. Therefore
\begin{equation}
 \norm{Jv}_1\le\sum_i p_i\left(|v_i|+\left|\sum_jp_jv_j\right|\right)
 \le2\norm{v}_\infty.
 \label{eq:softmax-lip}
\end{equation}
Integration along the segment between logits proves $\norm{\softmax(z)-\softmax(z')}_1\le2\norm{z-z'}_\infty$. This deliberately conservative cross-norm constant suffices here; no optimality claim is made for it. For causal attention, the calculation is performed on each row's fixed allowed support.

\subsection{Attention and feed-forward composition}

Let two retained-state matrices differ by at most $\Delta$ in row-maximum norm. Their normalized states differ by at most $c_l\Delta$. By adding and subtracting a mixed query--key product, every head logit changes by at most
\[
 \frac{\overline Q_{l,h}\norm{W^K_{l,h}}+
 \overline K_{l,h}\norm{W^Q_{l,h}}}{\sqrt d}\,c_l\Delta
 =a_{l,h}\Delta.
\]
The corresponding value difference is at most $c_l\norm{W^V_{l,h}}\Delta$. Decompose the output difference into a value-change term and a probability-change term. Equation~\eqref{eq:softmax-lip} bounds the latter by $2V_{l,h}a_{l,h}\Delta$, so the total is at most $b_{l,h}\Delta$. Concatenation across heads changes the row norm by at most $\sqrt{\sum_hb_{l,h}^2}\Delta$. Multiplication by $O_l$ and addition of the attention residual produce the factor $1+\norm{O_l}\sqrt{\sum_hb_{l,h}^2}$.

The tokenwise map $x\mapsto x+\operatorname{MLP}_l(\overline N_l(x))$ has Lipschitz constant at most $R_l=1+f_l\bar c_l$, proving~\eqref{eq:k-const}. For a two-matrix MLP $W_2\phi(W_1x+b_1)+b_2$, one may take $f_l\le\norm{W_2}L_\phi\norm{W_1}$ when $\phi$ is $L_\phi$-Lipschitz. For gated MLPs, a bound on their actual Jacobian over a convex bounded enclosure of the normalized domain is required instead; the two-matrix expression cannot be substituted unchanged. The final readout constant can be taken as $\Lambda=\norm{W_{\rm out}}_{2\to\infty}c_{\rm final}$ when final normalization has Lipschitz constant $c_{\rm final}$.

\section{Limits of Entropy-Only and Cross-Phase Guarantees}

The mixture envelope depends on both head coverage and the distributional discrepancy. A low collision entropy alone controls neither the value directions nor the truth of the attended statement. The sharp construction in Lemma~\ref{lem:local} shows that value information is necessary to improve a worst-case $2V\eta$ guarantee. Conversely, equal values can make deletion harmless despite a large discarded mass. Thus the attention-only bound is conservative for some prompts and cannot be universally tightened using entropy without further assumptions.

Likewise, a common R\'enyi functional does not imply conservation across page partitions. Consider one chunk made of two equal pages of size $b$, with masses $1-\alpha$ and $\alpha$, uniform within pages. Its negative collision entropy is
\begin{equation}
 \Phi_2(c)=-\ln b+\ln((1-\alpha)^2+\alpha^2).
 \label{eq:partition-chunk}
\end{equation}
If a page score is its conditional negative collision entropy plus log page mass, the arithmetic mean of the two page scores is
\begin{equation}
 \frac{\Psi(P_1)+\Psi(P_2)}2=-\ln b+\frac12\ln(\alpha(1-\alpha)).
 \label{eq:partition-page}
\end{equation}
The difference diverges as $\alpha\to0$, despite perfect page alignment and identical distributions in the two phases. A boundary-only error term therefore cannot justify such a conservation identity. The admissible cross-phase statements in this paper are explicit resource counts and conditional output inequalities, not measure preservation.

\section{Implementation Invariants and Reproducibility Contract}

An implementation must reproduce observer averaging before entropy, the regularization floor, complete-chunk selection, deterministic ties, and the independence condition of Theorem~\ref{thm:audit}. Row-wise collision accumulation followed by averaging is a different operator. The audit must not silently reuse proposal-dependent sample selection. Threshold changes triggered by audit failures require a separately justified protocol. Empty candidate sets, all-ineligible groups, nonfinite statistics, insufficient protected-page capacity, and failed token targets must have explicit fallbacks.

Retained positions must stay in original order, protected positions must survive, and logical sequence positions must remain distinct from compact storage offsets. Query, key, value, normalization, and readout constants in Theorem~\ref{thm:network} must correspond to the same checkpoint and numerical execution assumptions. A floating-point implementation may require additional rounding-error terms if a formal numerical certificate is claimed. Analytic operator bounds in real arithmetic are not automatically machine-verified guarantees.

The future artifact should release the exact scoring reference, optimized-kernel comparisons against it, the candidate and audit observer indices, constraint tables, dual prices when used, retained-index maps, and per-request resource traces. Experiments should retain failed and bypassed cases in the denominator. This contract makes the theoretical objects reproducible without treating implementation completion or empirical validation as already established.


\begin{thebibliography}{99}
\small
\setlength{\itemsep}{3pt}
\bibitem{jiang2023} H. Jiang et al. LLMLingua: Compressing prompts for accelerated inference of large language models. EMNLP, 2023. \url{https://arxiv.org/abs/2310.05736}.
\bibitem{fang2025} Y. Fang et al. AttentionRAG: Attention-guided context pruning in retrieval-augmented generation. 2025. \url{https://arxiv.org/abs/2503.10720}.
\bibitem{fu2024} Q. Fu et al. LazyLLM: Dynamic token pruning for efficient long context LLM inference. 2024. \url{https://arxiv.org/abs/2407.14057}.
\bibitem{long2025} L. Long et al. SlimInfer: Accelerating long-context LLM inference via dynamic token pruning. 2025. \url{https://arxiv.org/abs/2508.06447}.
\bibitem{taniguchi2026} R. Taniguchi et al. Adaptive layer selection for layer-wise token pruning in LLM inference. Findings of ACL, 2026. \url{https://arxiv.org/abs/2601.07667}.
\bibitem{zhang2023} Z. Zhang et al. H2O: Heavy-hitter oracle for efficient generative inference of large language models. 2023. \url{https://arxiv.org/abs/2306.14048}.
\bibitem{li2024} Y. Li et al. SnapKV: LLM knows what you are looking for before generation. 2024. \url{https://arxiv.org/abs/2404.14469}.
\bibitem{feng2024} Y. Feng et al. Ada-KV: Optimizing KV cache eviction by adaptive budget allocation for efficient LLM inference. 2024; revised 2025. \url{https://arxiv.org/abs/2407.11550}.
\bibitem{ainslie2023} J. Ainslie et al. GQA: Training generalized multi-query transformer models from multi-head checkpoints. EMNLP, 2023. \url{https://arxiv.org/abs/2305.13245}.
\bibitem{pageentro} Inbasekaran S. Page-EntroKV: Hardware-aligned, entropy-weighted KV-cache eviction under grouped-query attention. Unpublished author manuscript supplied with this work, 2026.
\bibitem{xiao2024} G. Xiao et al. Efficient streaming language models with attention sinks. ICLR, 2024. \url{https://arxiv.org/abs/2309.17453}.
\bibitem{wu2024} W. Wu et al. Retrieval head mechanistically explains long-context factuality. 2024. \url{https://arxiv.org/abs/2404.15574}.
\bibitem{dao2023} T. Dao. FlashAttention-2: Faster attention with better parallelism and work partitioning. 2023. \url{https://arxiv.org/abs/2307.08691}.
\bibitem{kwon2023} W. Kwon et al. Efficient memory management for large language model serving with PagedAttention. SOSP, 2023. \url{https://arxiv.org/abs/2309.06180}.
\bibitem{zhong2024} Y. Zhong et al. DistServe: Disaggregating prefill and decoding for goodput-optimized large language model serving. 2024. \url{https://arxiv.org/abs/2401.09670}.
\bibitem{agrawal2024} A. Agrawal et al. Taming throughput-latency tradeoff in LLM inference with Sarathi-Serve. 2024. \url{https://arxiv.org/abs/2403.02310}.
\bibitem{vanerven2014} T. van Erven and P. Harremo\"es. R\'enyi divergence and Kullback--Leibler divergence. IEEE Transactions on Information Theory, 60(7):3797--3820, 2014. \url{https://arxiv.org/abs/1206.2459}.
\bibitem{hoeffding1963} W. Hoeffding. Probability inequalities for sums of bounded random variables. Journal of the American Statistical Association, 58(301):13--30, 1963. \url{https://doi.org/10.1080/01621459.1963.10500830}.
\end{thebibliography}
\end{document}